\definecolor{codegreen}{rgb}{0,0.6,0}
\definecolor{codegray}{rgb}{0.5,0.5,0.5}
\definecolor{codepurple}{rgb}{0.58,0,0.82}
\definecolor{backcolour}{rgb}{0.95,0.95,0.92}
\lstdefinestyle{mystyle}{
    backgroundcolor=\color{backcolour},   
    commentstyle=\color{codegreen},
    keywordstyle=\color{magenta},
    numberstyle=\tiny\color{codegray},
    stringstyle=\color{codepurple},
    basicstyle=\ttfamily\footnotesize,
    breakatwhitespace=false,         
    breaklines=true,                 
    captionpos=b,                    
    keepspaces=true,                 
    numbers=left,                    
    numbersep=5pt,                  
    showspaces=false,                
    showstringspaces=false,
    showtabs=false,                  
    tabsize=2
}
\begin{document}
%
\title{Joint Class-Affinity Loss Correction for Robust Medical Image Segmentation with Noisy Labels}

\author{Xiaoqing Guo\orcidID{0000-0002-9476-521X} \and Yixuan Yuan \orcidID{0000-0002-0853-6948}}
\authorrunning{X. Guo and Y. Yuan}
%
\institute{Department of Electrical Engineering, City Univeristy of Hong Kong, Hong Kong
\email{xiaoqiguo2-c@my.cityu.edu.hk, yxyuan.ee@cityu.edu.hk}}

\maketitle              
\begin{abstract}
Noisy labels collected with limited annotation cost prevent medical image segmentation algorithms from learning precise semantic correlations. Previous segmentation arts of learning with noisy labels merely perform a pixel-wise manner to preserve semantics, such as pixel-wise label correction, but neglect the pair-wise manner. In fact, we observe that the pair-wise manner capturing affinity relations between pixels can greatly reduce the label noise rate. Motivated by this observation, we present a novel perspective for noisy mitigation by incorporating both pixel-wise and pair-wise manners, where supervisions are derived from noisy class and affinity labels, respectively. Unifying the pixel-wise and pair-wise manners, we propose a robust Joint Class-Affinity Segmentation (JCAS) framework to combat label noise issues in medical image segmentation. Considering the affinity in pair-wise manner incorporates contextual dependencies, a differentiated affinity reasoning (DAR) module is devised to rectify the pixel-wise segmentation prediction by reasoning about intra-class and inter-class affinity relations. To further enhance the noise resistance, a class-affinity loss correction (CALC) strategy is designed to correct supervision signals via the modeled noise label distributions in class and affinity labels. Meanwhile, CALC strategy interacts the pixel-wise and pair-wise manners through the theoretically derived consistency regularization. Extensive experiments under both synthetic and real-world noisy labels corroborate the efficacy of the proposed JCAS framework with a minimum gap towards the upper bound performance. The source code is available at \url{https://github.com/CityU-AIM-Group/JCAS}.
\keywords{Class and affinity  \and Loss correction \and Noisy label.}
\end{abstract}

\section{Introduction}
Image segmentation, as one of the most essential tasks in medical image analysis, has received lots of attention over the last decades. This task aims to assign a semantic label for each pixel, further benefiting various clinical applications such as treatment planning and surgical navigation \cite{karimi2021convolution}. Deep learning algorithms based on convolutional neural networks (CNNs) have achieved remarkable progress in medical image segmentation, but they require a large amount of training data with precise pixel-level annotations that are extremely expensive and labor-intensive to obtain \cite{li2021superpixel}. With limited budgets and efforts, the resulting dataset would be noisy, and the presence of label noises may mislead the segmentation model to memorize wrong semantic correlations, resulting in severely degraded generalizability \cite{karimi2020deep,zhou2021learning}. Hence, developing medical image segmentation techniques that are robust to noisy labels in training data is of great importance. 

\begin{wrapfigure}{r}{58mm}
\centering
\includegraphics[width=58mm]{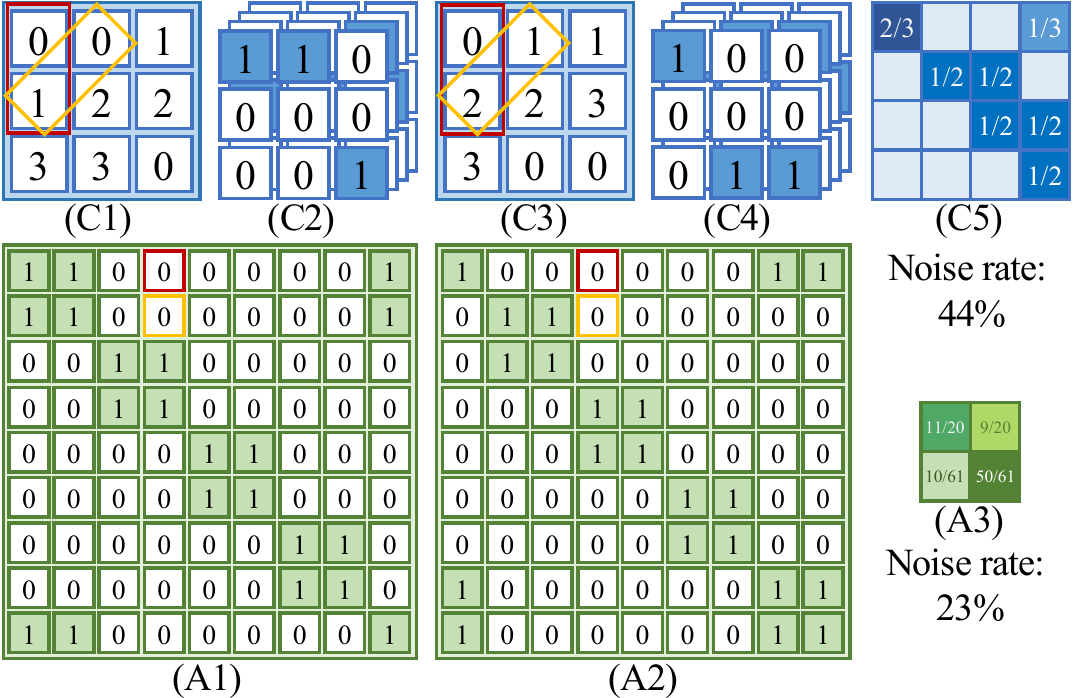}
\caption{A toy example to illustrate the comparison between pixel-wise class label (C) and pair-wise affinity label (A). (C1, C2) True class label and the one-hot encoding. (C3, C4) Noisy class label and the one-hot encoding. (C5) Class-level noise transition matrix with noise rate of 44\%. (A1) True affinity label. (A2) Noisy affinity label. (A3) Affinity-level noise transition matrix with noise rate of 23\%.}
\label{fig:Motivation}
\end{wrapfigure}


Solutions towards noisy label issues in image classification tasks have been extensively explored  \cite{li2021provably,shu2019meta,wu2021class2simi,zhang2020distilling,zhou2021learning}, while pixel-wise label noises in segmentation tasks have not been well-studied, especially for medical image analysis. Previous solutions for medical image segmentation with noisy labels can be summarized into three aspects. Firstly, some researchers model the noisy label distribution through either the confusion matrix \cite{zhang2020disentangling} or noise transition matrix (NTM) \cite{guo2022simt,guo2021metacorrection}, and then leverage the modeled distribution for {pixel-wise loss corrections}. Secondly, {pixel-wise label refurbishments} are implemented by the spatial label smoothing regularization \cite{xu2021noisy} or the convex combination with superpixel predictions \cite{li2021superpixel}. Thirdly, {pixel-wise resampling and reweighting strategies} are designed to concentrate the segmentation model on learning reliable pixels. For instance, Tri-network \textit{et al.} \cite{zhang2020robust} contains three collaborative networks and adaptively selects informative samples according to the consensus between predictions from different networks. Wang \textit{et al.} \cite{wang2020meta} leverage meta-learning to automatically estimate an importance map, thereby mining reliable information from important pixels. 

Despite the impressive performance in promoting generalizability, almost all existing image segmentation methods tackle label noise issues merely in a pixel-wise manner. 
\textit{Complementing the widely utilized pixel-wise manner, we make the first effort in exploiting the affinity relation between pixels within an image for noisy mitigation in a pair-wise manner}. Unlike {pixel-wise manner} that regularizes pixels with {class label} (Fig. \ref{fig:Motivation} C1-4), {pair-wise manner} constrains relations between pixels with {affinity label} (Fig. \ref{fig:Motivation} A1-2), indicating whether two pixels belong to the same category. The motivation behind this conception is to reduce the ratio of label noises. Intuitively, if one pixel in a pair is mislabeled (e.g. the \textcolor{red}{red} rectangle in Fig. \ref{fig:Motivation}) or even both pixels are mislabeled (e.g. the \textcolor{orange}{orange} rectangle in Fig. \ref{fig:Motivation}), the affinity label of this pair might be correct, thereby reducing the noise rate (e.g. from 44\% to 23\%  in Fig. \ref{fig:Motivation}). Moreover, affinity relations in pair-wise manner comprehensively incorporate intra-class and inter-class contextual dependencies, and thus it may be beneficial to explicitly differentiate them for correlated information propagation and irrelevant information elimination.

Unifying the pixel-wise and pair-wise manners, we propose a robust Joint Class-Affinity Segmentation (JCAS) framework to combat label noise issues in medical image segmentation. JCAS framework has two supervision signals, derived from noisy class labels and noisy affinity labels, for regularizing pixel-wise predictions and pair-wise affinity relations, respectively. {These two supervision signals in JCAS are complementary to each other since the pixel-wise one preserves semantics and the pair-wise one reduces noise rate.} Pair-wise affinity relations derived at the feature level model the contextual dependencies, indicating the correlation between any two pixels in a pair. Considering differentiated contextual dependencies can prevent undesirable aggregations, \textit{we devise a differentiated affinity reasoning (DAR) module to guide the refinement of pixel-wise predictions with differentiated affinity relations}. DAR module differentiates affinity relations to explicitly aggregate intra-class correlated information and eliminate inter-class irrelevant information. \textit{To further correct both pixel-wise and pair-wise supervision signals, we design a class-affinity loss correction (CALC) strategy.} This strategy models noise label distributions in class labels and affinity labels as two NTMs for loss correction, meanwhile, it unifies the pixel-wise and pair-wise supervisions through the theoretically derived consistency regularization, thereby facilitating the noise resistance. Extensive experiments under both synthetic and real-world noisy labels demonstrate the effectiveness of the proposed JCAS framework with a minimum gap towards the upper bound performance.

\section{Joint Class-Affinity Segmentation Framework}
\begin{figure}[tp!]
\centering
\includegraphics[width=122mm]{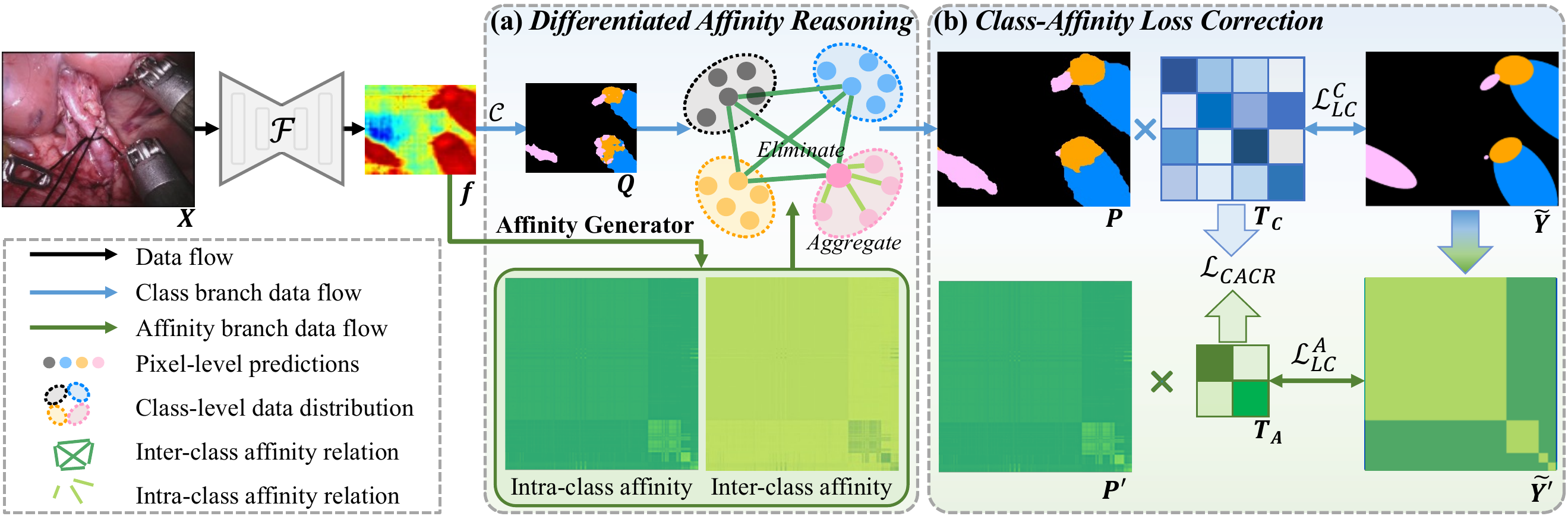}
\caption{Illustration of Joint Class-Affinity Segmentation (JCAS) framework, including (a) differentiated affinity reasoning and (b) class-affinity loss correction.}
\label{fig:Framework}
\end{figure}

The proposed Joint Class-Affinity Segmentation (JCAS) framework is illustrated in Fig. \ref{fig:Framework}. Formally, we have access to training images $\mathcal{X} = \{ \boldsymbol{X} \in \mathbb{R}^{H\times W\times 3} \}$ with spatial dimension of  $H\times W$. The corresponding one-hot encoding of pixel-wise noisy labels is denoted as $\mathcal{Y} = \{\widetilde{\boldsymbol{Y}} \in \mathbb{R}^{H\times W\times C} \}$, where $C$ indicates the number of classes. We aim to learn a segmentation network that is robust to label noises in $\mathcal{Y}$ during the training process and could derive clean labels for test data. Given an input training image $\boldsymbol{X}$, a feature map $\boldsymbol{f} \in \mathbb{R}^{h\times w\times d}$ is first computed from the feature extractor $\mathcal{F}$. Note that $h$, $w$, and $d$ denote the height, width, and channel number of the feature map. Then, the feature map is passed through two branches for estimating pixel-wise predictions (upper branch in Fig. \ref{fig:Framework}) and pair-wise affinity relations (lower branch in Fig. \ref{fig:Framework}), respectively. 

In the upper branch, a classifier $\mathcal{C}$ with softmax is used to produce the coarse segmentation result $\boldsymbol{Q}$. In the lower branch, an affinity generator is introduced to generate the affinity map $\boldsymbol{P'} \in [0, 1]^{n \times n}$ where $n$=$h \times w$, and the generator is formulated as $\boldsymbol{P'}(k_{1}, k_{2})=norm(\frac{\boldsymbol{f}(i_{1}, j_{1})^{\top} \boldsymbol{f}(i_{2}, j_{2})}{\left\|\boldsymbol{f}(i_{1}, j_{1})\right\|_{2}\left\|\boldsymbol{f}(i_{1}, j_{1})\right\|_{2}})$. $(i_{\cdot}, j_{\cdot})$ is the coordinate of a pixel in feature map, and $(k_{1}, k_{2})$ is the coordinate in affinity map. Note that $k_{1}$ and $(i_{1}, j_{1})$ denote the position of the same pixel. The operator $norm(\cdot)$ performs normalization along each row to ensure affinity relations towards pixel $k_{1}$ are summed to 1, i.e., $\sum_{k_{2}}^{n}\boldsymbol{P'}(k_{1}, k_{2}) = 1$. 
The obtained affinity map $\boldsymbol{P'}$ measures feature similarity between two pixels. Since intra-class pixels share the similar semantic features, intra-class pixel pairs usually show large similarity scores in $\boldsymbol{P'}$, which highlights these pixel pairs belonging to the same class. Hence, 
$\boldsymbol{P'}$ reveals the intra-class affinity relations. 
Then, we devise a differentiated affinity reasoning (DAR) module (Fig. \ref{fig:Framework} (a), Sec. \ref{DAR}) to obtain refined segmentation result $\boldsymbol{P}$, where the affinity map $\boldsymbol{P'}$ derived in the lower branch is leveraged to guide the refinement of previously generated coarse segmentation result $\boldsymbol{Q}$ in the upper branch. Both pixel-wise segmentation prediction $\boldsymbol{P}$ and pair-wise affinity map $\boldsymbol{P'}$ are regularized through the proposed class-affinity loss correction (CALC) strategy (Fig. \ref{fig:Framework} (b), Sec. \ref{CALC}). The optimized JCAS framework produces the refined segmentation result $\boldsymbol{P}$ as the final prediction in test phase.

\subsection{Differentiated Affinity Reasoning (DAR)}\label{DAR}
In the image segmentation task, each image is equipped with a ground truth map, indicating pixel-wise semantic class label. Pixel-wise supervision signal cannot regularize the segmentation network to model the contextual dependencies from isolated pixels. Hence, we incorporate the contextual dependency embedded in the pair-wise affinity map $\boldsymbol{P'}$ to guide the refinement of the pixel-wise segmentation result $\boldsymbol{Q}$. Moreover, different from existing affinity methods \cite{xu2021leveraging,zhu2021weakly} that aggregates contextual information as a mixture and may introduce undesirable contextual aggregations, we propose a differentiated affinity reasoning (DAR) module to explicitly distinguish intra-class and inter-class contextual dependencies and leverage the differentiated contexts to rectify segmentation predictions. 

In addition to previously calculated pair-wise affinity map $\boldsymbol{P'}$ that represents intra-class affinity relation, we infer the reverse affinity map $\boldsymbol{P'_{re}}=norm(1-\boldsymbol{P'})$. The reverse affinity map measures the dissimilarity between two pixels and reveals the inter-class affinity relations. The proposed DAR module performs intra-class and inter-class affinity reasonings, respectively. To be specific, the intra-class affinity reasoning aims to aggregate correlated information according to the intra-class affinity relations $\boldsymbol{P'}$, and the inter-class affinity reasoning aims to eliminate irrelevant information according to the inter-class affinity relations $\boldsymbol{P'}_{re}$, which can be formulated as:
\begin{equation}
\begin{footnotesize}
\begin{aligned}
\begin{split}
\boldsymbol{P}_{intra}(k_{1})=\boldsymbol{P}(k_{1}) + \sum_{k_{2}}^{n} \boldsymbol{P'}(k_{1}, k_{2})  \boldsymbol{Q}(k_{2});\boldsymbol{P}_{inter}(k_{1})=\boldsymbol{P}(k_{1}) - \sum_{k_{2}}^{n} \boldsymbol{P'_{re}}(k_{1}, k_{2})  \boldsymbol{Q}(k_{2}).
\end{split}
\end{aligned}
\end{footnotesize}
\label{eqDAR}
\end{equation}The refined pixel-wise prediction $\boldsymbol{P}$ is obtained through combining both intra-class and inter-class affinity reasoning results, i.e., $\boldsymbol{P}=\frac{1}{2}(\boldsymbol{P}_{intra}+\boldsymbol{P}_{inter})$. With the proposed DAR module, the correct predictions are strengthened, and the incorrect segmentation results are debiased and rectified. 

\subsection{Class-Affinity Loss Correction (CALC)}\label{CALC}
In multi-class image segmentation task, the widely used cross entropy loss is computed in a pixel-wise manner and formulated as $\mathcal{L}^{C}_{CE} = - \sum_{k}^{H\times W}\widetilde{\boldsymbol{Y}}(k) \log \boldsymbol{P}(k)$. However, directly minimizing the empirical risk of training data with respect to noisy labels $\widetilde{\boldsymbol{Y}}$ will lead to severely degraded generalizability. To reduce the noise rate, we introduce the pair-wise manner, and the corresponding affinity label is derived by $\boldsymbol{Y'}(k1, k2)=\boldsymbol{Y}(k1)^{\top}\boldsymbol{Y}(k2)$. Only if two pixels share the same category, the value in the affinity label $\boldsymbol{Y'}$ will be 1, otherwise $\boldsymbol{Y'}$ will be 0. Although the pair-wise manner can greatly reduce the noise rate compared to the pixel-wise manner as demonstrated in Fig. \ref{fig:Motivation}, there still exist noises, and thus the binary entropy loss $\mathcal{L}^{A}_{Bi} = - \sum_{k}^{H\times W}\widetilde{\boldsymbol{Y}}(k) \log \boldsymbol{P'}(k) + (1-\widetilde{\boldsymbol{Y}}(k)) \log (1-\boldsymbol{P'}(k))$ for affinity map supervision cannot guarantee the robustness of segmentation model towards label noises, resulting in biased semantic correlations. To facilitate the noise tolerance of $L^{C}_{CE}$ and $L^{A}_{Bi}$, we devise the class-affinity loss correction (CALC) strategy, including the class-level loss correction $\mathcal{L}^{C}_{LC}$ and affinity-level loss correction $\mathcal{L}^{A}_{LC}$. Meanwhile, a theoretically derived class-affinity consistency regularization $\mathcal{L}_{CACR}$ is advanced to unify pixel-wise and pair-wise supervisions.

\textbf{\textit{Class-level Loss Correction.}} We model the label noise distributions in noisy class labels through a noise transition matrix (NTM) $\boldsymbol{T}_{C}\in [0,1]^{C\times C}$, which specifies the probability of clean label $m$ translating to noisy label $n$ via $\boldsymbol{T}_{C}(m, n)=p(\widetilde{Y}=n|Y=m)$. Hence, the probability of one pixel being predicted as $\widetilde{Y}=n$ is computed by $p(\widetilde{Y}=n)=\sum_{m=1}^{C} p(Y=m)\cdot \boldsymbol{T}_{C}(m, n)$, where $p(Y)$ is the clean class probability. 
Then the modeled noise label distribution is exploited to correct the supervision signal (i.e. $L^{C}_{CE}$) derived from noisy labels via $\mathcal{L}^{C}_{LC}=- \sum_{k}^{H\times W}\widetilde{\boldsymbol{Y}}(k) \log [\boldsymbol{P}(k)\boldsymbol{T}_{C}]$. This corrected loss encourages the consistency between noisy translated predictions and noisy class labels. Therefore, once the true NTM is obtained, the desired estimation of clean class predictions can be recovered by the output of segmentation model $\boldsymbol{P}$. For the estimation of the true NTM, we exploit the volume minimization regularization in \cite{li2021provably}.

\textbf{\textit{Affinity-level Loss Correction.}} Similar to the class-level NTM, affinity-level NTM is defined as $\boldsymbol{T}_{A}\in [0, 1]^{2\times 2}$, modeling the probability of clean affinity labels flipping to noisy affinity labels. 
Then, we exploit the modeled label noise distribution NTM to rectify the supervision signal (i.e. $L^{A}_{Bi}$) for affinity relation learning. Therefore, the affinity-level loss correction is formulated as $\mathcal{L}^{A}_{LC} = - \sum_{k}^{H\times W}\widetilde{\boldsymbol{Y}}(k) \log [\boldsymbol{P'}(k)\boldsymbol{T}_{A}] + (1-\widetilde{\boldsymbol{Y}}(k)) \log (1-\boldsymbol{P'}(k)\boldsymbol{T}_{A})$.

\textbf{\textit{Class-Affinity Consistency Regularization.}} To unify the pixel-wise and pair-wise supervisions, we bridge the class-level and affinity-level NTMs in Theorem \ref{Theorem}. A theoretical proof for the Theorem is provided in Sec. 5 \textit{supplementary}. Hence, the class-affinity consistency regularization is defined as $\mathcal{L}_{CACR}=\left\| \boldsymbol{T}_{C\to A} - \boldsymbol{T}_{A}\right\|_{2}$. 

Combining the above defined losses, we obtain the joint loss of the proposed JCAS framwork as: $\mathcal{L}=\mathcal{L}^{C}_{LC} + \mathcal{L}^{A}_{LC} + \lambda \mathcal{L}_{CACR}$, which interacts the pixel-wise and pair-wise manners. Note that $\lambda$ is the weighting factor of $\mathcal{L}_{CACR}$. 
\begin{theorem}\label{Theorem}
Assume that the class distribution of dataset denoting proportions of pixel number is $\mathcal{N}=[N_{1}, N_{2}, ..., N_{C}]$, and the noise is class-dependent
\footnote{Real-world label noises can be well approximated via class-dependent noises \cite{cheng2020learning,guo2022simt,li2021provably}.}. 
Given a class-level NTM $\boldsymbol{T}_{C}$, the translated affinity-level NTM $\boldsymbol{T}_{C\to A}$ is calculated by
\begin{equation}
\begin{footnotesize}
\begin{array}{l}
\boldsymbol{T}_{C\to A}(0, 0)=1-\boldsymbol{T}_{C\to A}(0, 1), ~\boldsymbol{T}_{C\to A}(0, 1)=\frac{\sum_{m} \left[N_{m}\sum_{n}\boldsymbol{T}_{C}(m, n)\right]^{2} - \sum_{m}(N_{m})^{2} \left \| \boldsymbol{T}_{C} \right \|_{2}^{2}}{\sum_{m}\left[ N_{m}(\sum_{m}N_{m}-N_{m})\right]},\\
\boldsymbol{T}_{C\to A}(1,0)=1-\boldsymbol{T}_{C\to A}(1, 1),  ~\boldsymbol{T}_{C\to A}(1, 1)=\frac{\sum_{m}(N_{m})^{2} \left \| \boldsymbol{T}_{C} \right \|_{2}^{2}}{\sum_{m}(N_{m})^{2}}.
\end{array}
\end{footnotesize}
\end{equation}
\end{theorem}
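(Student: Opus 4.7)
The plan is to compute each entry of $\boldsymbol{T}_{C\to A}$ directly from its definition as a conditional probability, using two ingredients: (i) the class-dependent noise assumption, which makes the noise on two pixels conditionally independent given their true classes, and (ii) the class proportions $\mathcal{N}$, which govern the marginal law over the true class of a randomly drawn pixel.

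First I would set up the probabilistic model. Sample an ordered pixel pair independently according to the class proportions, so that $P(Y_1 = m_1, Y_2 = m_2) = N_{m_1} N_{m_2}/S^2$ with $S = \sum_m N_m$. Define the clean affinity $A = \mathbf{1}[Y_1 = Y_2]$ and the noisy affinity $\widetilde{A} = \mathbf{1}[\widetilde{Y}_1 = \widetilde{Y}_2]$, so that $\boldsymbol{T}_{C\to A}(a, \widetilde{a}) = P(\widetilde{A} = \widetilde{a} \mid A = a)$ by definition. Under class-dependent noise, the conditional law of $(\widetilde{Y}_1, \widetilde{Y}_2)$ given $(Y_1, Y_2)$ factors as $\boldsymbol{T}_C(Y_1,\widetilde{Y}_1)\boldsymbol{T}_C(Y_2,\widetilde{Y}_2)$.

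Next I would derive $\boldsymbol{T}_{C\to A}(1,1)$. Bayes gives $P(Y_1 = Y_2 = m \mid A=1) = N_m^2/\sum_{m'} N_{m'}^2$, and conditional independence of the noise yields $P(\widetilde{Y}_1 = \widetilde{Y}_2 \mid Y_1=Y_2=m) = \sum_n \boldsymbol{T}_C(m,n)^2$, i.e.\ the squared $\ell_2$ norm of the $m$-th row of $\boldsymbol{T}_C$. Marginalising over $m$ yields the stated formula, interpreting the factor $\|\boldsymbol{T}_C\|_2^2$ inside the sum over $m$ as the squared row-norm. Then I would derive $\boldsymbol{T}_{C\to A}(0,1)$ in the same style, but now conditioning on pairs $(m_1,m_2)$ with $m_1 \neq m_2$ weighted by $N_{m_1} N_{m_2}$. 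The denominator $\sum_{m_1 \neq m_2} N_{m_1} N_{m_2} = \sum_m N_m(S - N_m)$ matches the stated denominator immediately. For the numerator I would use conditional independence to obtain $\sum_n \boldsymbol{T}_C(m_1,n)\boldsymbol{T}_C(m_2,n)$ per pair and then rewrite the off-diagonal sum as a full sum minus the diagonal:
\begin{equation*}
\sum_{m_1\neq m_2} N_{m_1}N_{m_2}\!\sum_n \boldsymbol{T}_C(m_1,n)\boldsymbol{T}_C(m_2,n) = \sum_n \Bigl(\sum_m N_m \boldsymbol{T}_C(m,n)\Bigr)^2 - \sum_m N_m^2\!\sum_n \boldsymbol{T}_C(m,n)^2,
\end{equation*}
which aligns with the stated numerator under the same row-norm reading of $\|\boldsymbol{T}_C\|_2^2$. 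The remaining entries $\boldsymbol{T}_{C\to A}(0,0)$ and $\boldsymbol{T}_{C\to A}(1,0)$ then follow from the row-stochasticity of $\boldsymbol{T}_{C\to A}$.

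The main obstacle I anticipate is the algebraic bookkeeping in the off-diagonal case, where the sum over $m_1 \neq m_2$ must be repackaged via the add-and-subtract trick above into a closed form involving the column-weighted sums $\sum_m N_m \boldsymbol{T}_C(m,n)$ and the row-squared norms. Everything else is a direct application of Bayes' rule and the conditional independence granted by the class-dependent noise hypothesis; no extra probabilistic machinery is required.
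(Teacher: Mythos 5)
Your proposal is correct and follows essentially the same route as the paper's proof: both compute each entry of $\boldsymbol{T}_{C\to A}$ as a ratio of pair counts weighted by $N_{m}N_{m'}$, exploit the factorization of class-dependent noise over the two pixels of a pair, and handle the off-diagonal case via the full-sum-minus-diagonal trick. One remark: your correctly derived numerator $\sum_{n}\bigl(\sum_{m}N_{m}\boldsymbol{T}_{C}(m,n)\bigr)^{2}-\sum_{m}N_{m}^{2}\sum_{n}\boldsymbol{T}_{C}(m,n)^{2}$ actually exposes an index-ordering slip in the statement as printed, whose first term $\sum_{m}\bigl[N_{m}\sum_{n}\boldsymbol{T}_{C}(m,n)\bigr]^{2}$ would collapse to $\sum_{m}N_{m}^{2}$ by row-stochasticity; your reading (and your per-row interpretation of $\left\|\boldsymbol{T}_{C}\right\|_{2}^{2}$) is the intended one.
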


\section{Experiments}

\begin{figure}[tp!]
\centering
\includegraphics[width=122mm]{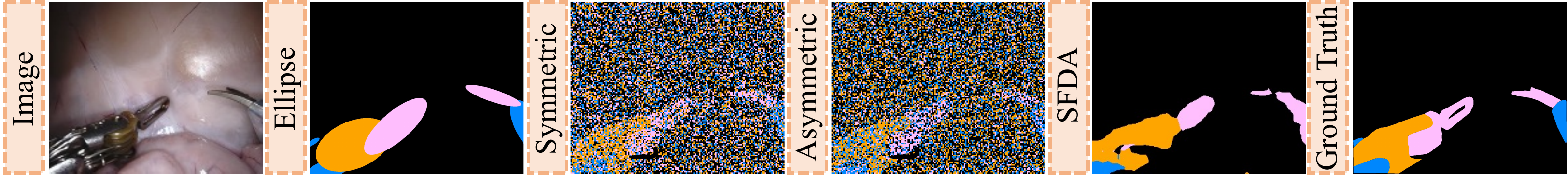}
\caption{Illustration of dataset with different kinds of label noises.}
\label{fig:dataset}
\end{figure}

\textbf{Dataset.} We validate the proposed JCAS method on the surgical instrument dataset Endovis18 \cite{allan20202018}. It consists of 2384 images annotated with the instrument part labels, and the label space includes \textcolor{blue}{shaft}, \textcolor{orange}{wrist} and \textcolor{pink}{clasper} classes, as shown in Fig. \ref{fig:dataset}. The dataset is split into 1639 training images and 596 test images following \cite{gonzalez2020isinet}. 
Each image is resized into a resolution of 256$\times$320 in preprocessing. 

\noindent\textbf{Noise Patterns.} To comprehensively verify the robustness of JCAS, we conduct experiments with both \textbf{\textit{synthetic label noise}} (i.e., \textit{elipse}, \textit{symmetric} and \textit{asymmetric} noises) and \textbf{\textit{real-world label noise}} (i.e., \textit{noisy pseudo labels in source-free domain adaptation (SFDA)}), as compared in Fig. \ref{fig:dataset}. Specifically, the ellipse noisy label is a kind of weak annotation generated by drawing the minimal ellipse given the true segmentation label, greatly reducing the manual annotation cost. To simulate errors in the annotation process, ellipse labels are randomly dilated and eroded. Moreover, two commonly used label noises in the machine learning field, including symmetric and asymmetric noises with the rate of 0.5 \cite{li2021provably,zhou2021learning}, are used to evaluate JCAS. Furthermore, we introduce Endovis17 \cite{allan20192017} containing 1800 annotated images with domain shift to Endovis18, and generate realistic noisy labels from source only model trained on Endovis17. 

\noindent\textbf{Implementation.} 
The proposed JCAS framework is implemented with PyTorch on Nvidia 2080Ti. DeepLabV2 \cite{chen2017deeplab} with the pre-trained encoder ResNet101 is our segmentation backbone. The initial learning rate is set as 1e-4 for the pre-trained encoder and 1e-3 for the rest of trainable parameters. We adopt a batch size of 3 and the maximum epoch number of 200. The weighting factor $\lambda$ is 0.01. The segmentation performance is assessed by $Dice$ and $Jac$ scores.

\begin{table}[t!]
\centering
\caption{Comparison under four label noises. Best and second best results are \textbf{highlighted} and \underline{underlined}. `w/ Affinity' introduces pair-wise supervision $\mathcal{L}^{A}_{Bi}$ to backbone.}
\label{table:results}
\scalebox{0.8}{\begin{tabular}{c  |  c  |  c  c | c  c | c  c | c  c}
\toprule[1.2pt]
\multirow{2}*{Noises}&\multirow{2}*{Method}&\multicolumn{2}{c}{Shaft}&\multicolumn{2}{c}{Wrist}&\multicolumn{2}{c}{Clasper}&\multicolumn{2}{c}{Average}\\
\cline{3-10}
&&\textit{Dice (\%)}&\textit{Jac (\%)}&\textit{Dice (\%)}&\textit{Jac (\%)}&\textit{Dice (\%)}&\textit{Jac (\%)}&\textit{Dice (\%)}&\textit{Jac (\%)}\\
\hline

&Upper bound&88.740&81.699&65.045&52.627&70.531&56.618&74.772&63.648 \\
\hline

&RAUNet (19$'$) \cite{ni2019raunet}&83.137&74.139&56.941&43.215&61.081&45.883&67.053&54.412 \\
 
     
&LWANet (20$'$) \cite{ni2020attention}&81.945&72.735&53.626&40.886&\textbf{64.364}&\textbf{49.781}&66.645&54.468 \\

\multirow{7}*{Ellipse}&CSS (21$'$) \cite{pissas2021effective}&\underline{84.577}&\textbf{75.736}&57.597&43.687&63.686&48.347&\underline{68.620}&\underline{55.923} \\

&MTCL (21$'$) \cite{xu2021noisy}&72.719&60.540&39.386&27.474&49.662&35.085&53.922&41.033 \\

&SR (21$'$) \cite{zhou2021learning}&79.966&69.621&53.540&39.747&60.179&44.775&64.561&51.381 \\

&VolMin (21$'$) \cite{li2021provably}&81.320&70.758&60.470&46.408&58.203&42.524&66.664&53.230 \\
     
&Baseline \cite{chen2017deeplab}&79.021&68.097&42.069&29.582&55.489&40.175&58.860&45.951 \\

&w/ Affinity&82.158&72.339&49.128&35.455&58.933&43.594&63.406&50.463 \\

&w/ DAR&82.698&72.992&52.207&38.442&61.544&46.027&65.483&52.487 \\

&w/ CALC&82.973&73.126&\underline{61.885}&\underline{47.527}&60.416&44.821&68.425&55.158 \\


&Ours (JCAS)&\textbf{84.683}&\underline{75.378}&\textbf{65.599}&\textbf{51.623}&\underline{63.871}&\underline{48.356}&\textbf{71.384}&\textbf{58.452} \\
     
\bottomrule[0.6pt]
&RAUNet (19$'$) \cite{ni2019raunet}&68.044&54.397&31.581&20.676&41.302&27.819&46.976&34.297 \\
      
&LWANet (20$'$) \cite{ni2020attention}&0.294&0.150&10.089&5.908&10.228&5.489&6.870&3.849 \\

\multirow{4}*{Symmetric}&CSS (21$'$) \cite{pissas2021effective}&86.555&78.451&32.363&20.767&53.364&37.901&57.427&45.706 \\

&MTCL (21$'$) \cite{xu2021noisy}&78.480&67.855&50.011&38.013&55.515&40.411&61.336&48.760 \\
     
&SR (21$'$) \cite{zhou2021learning}&86.648&78.823&58.217&46.870&64.643&50.120&69.836&58.604 \\

&VolMin (21$'$) \cite{li2021provably}&\underline{86.811}&\underline{78.834}&\underline{63.712}&\underline{51.259}&\underline{66.604}&\underline{52.096}&\underline{72.376}&\underline{60.730} \\
  
&Baseline \cite{chen2017deeplab}&85.021&76.419&57.026&44.563&63.255&48.395&68.434&56.459 \\     

&Ours (JCAS)&\textbf{88.285}&\textbf{80.692}&\textbf{65.759}&\textbf{53.487}&\textbf{68.129}&\textbf{53.821}&\textbf{74.058}&\textbf{62.667} \\
     
\bottomrule[0.6pt]
&RAUNet (19$'$) \cite{ni2019raunet}&87.255&79.983&59.462&46.639&67.347&52.801&71.355&59.808 \\

&LWANet (20$'$) \cite{ni2020attention}&0.015&0.007&40.548&30.683&9.060&4.825&16.541&11.838 \\

\multirow{4}*{Asymmetric}&CSS (21$'$) \cite{pissas2021effective}&\textbf{89.825}&\textbf{83.543}&43.743&30.569&\textbf{69.285}&\textbf{54.758}&67.618&56.290 \\

&MTCL (21$'$) \cite{xu2021noisy}&74.544&62.525&41.433&30.533&48.077&33.676&54.685&42.244 \\
     
&SR (21$'$) \cite{zhou2021learning}&86.360&78.055&62.854&49.651&65.483&50.962&71.566&59.556 \\
          
&VolMin (21$'$) \cite{li2021provably}&86.840&78.796&\underline{63.345}&\underline{51.137}&65.220&50.996&\underline{71.802}&\underline{60.310} \\

&Baseline \cite{chen2017deeplab}&84.497&75.607&58.717&46.060&61.662&46.770&68.292&56.146 \\

&Ours (JCAS)&\underline{88.247}&\underline{80.730}&\textbf{67.298}&\textbf{54.922}&\underline{67.686}&\underline{53.436}&\textbf{74.410}&\textbf{63.029} \\
     
\bottomrule[0.6pt]
&RAUNet (19$'$) \cite{ni2019raunet}&73.370&61.568&56.063&42.570&45.979&31.720&58.471&45.286 \\
 
&LWANet (20$'$) \cite{ni2020attention}&75.377&64.457&53.203&39.799&\underline{48.558}&\underline{34.191}&59.046&46.149 \\

\multirow{4}*{SFDA}&CSS (21$'$) \cite{pissas2021effective}&74.419&64.261&\textbf{61.765}&\textbf{47.880}&45.749&31.709&\underline{60.644}&\underline{47.950} \\

&MTCL (21$'$) \cite{xu2021noisy}&72.289&60.346&51.095&37.972&38.762&25.567&54.048&41.295 \\
  
&SR (21$'$) \cite{zhou2021learning}&75.992&64.835&57.370&43.863&40.471&27.388&57.944&45.362 \\

&VolMin (21$'$) \cite{li2021provably}&\textbf{76.641}&\underline{65.063}&58.285&44.389&41.780&28.324&58.902&45.925 \\
   
&Baseline \cite{chen2017deeplab}&76.107&64.858&56.259&42.740&41.364&28.091&57.910&45.230 \\
     
&Ours (JCAS)&\underline{76.540}&\textbf{65.300}&\underline{59.904}&\underline{46.104}&\textbf{48.725}&\textbf{34.283}&\textbf{61.723}&\textbf{48.562} \\
\bottomrule[1.2pt]
\end{tabular}}
\end{table}

\noindent\textbf{Experiment results.} 
Experimental comparison results under four types of label noises are presented in Table \ref{table:results}, in which we list the performance of upper bound (i.e., model trained with clean labels), three state-of-the-arts \cite{ni2019raunet,ni2020attention,pissas2021effective} in instrument segmentation, three label noise methods \cite{xu2021noisy,zhou2021learning,li2021provably}, our backbone \cite{chen2017deeplab}, and the proposed JCAS. For a fair comparison, we reimplement \cite{xu2021noisy,zhou2021learning,li2021provably} using the same backbone \cite{chen2017deeplab}. Compared with the aforementioned baselines, JCAS shows the minimum performance gap with the upper bound under all kinds of label noises, demonstrating the robustness of JCAS. Despite the satisfactory performance under ellipse and SFDA noises, LWANet \cite{ni2020attention} cannot deal with the other two types of noises, resulting in 6.870\% and 16.541\% $Dice$ scores. In contrast, JCAS shows comparable result to the upper bound with only 0.981\% and 0.388\% $Jac$ gaps under symmetric and asymmetric label noises. We further illustrate typical surgical instrument segmentation results in Fig. \ref{fig:results}, validating the superiority of JCAS over baseline methods in the qualitative aspect.

\begin{figure}[t!]
\centering
\includegraphics[width=122mm]{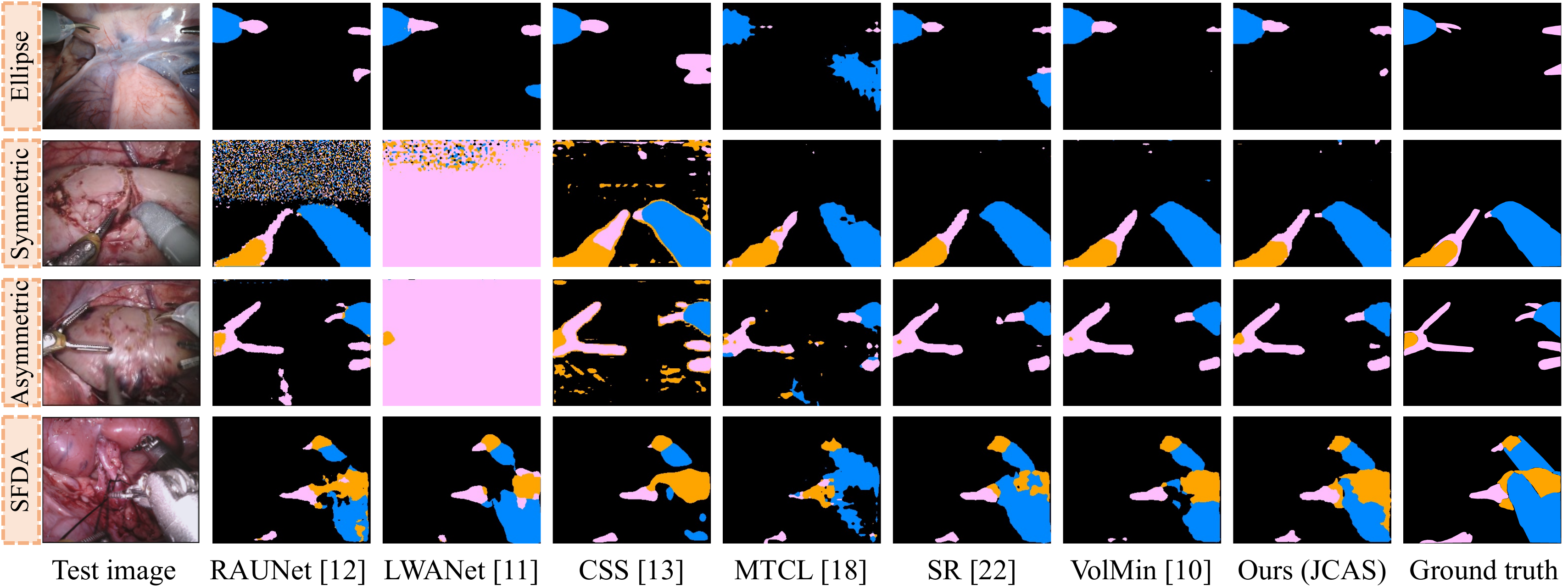}
\caption{Comparison of segmentation results.}
\label{fig:results}
\end{figure}

\begin{figure}[t!]
\centering
\includegraphics[width=122mm]{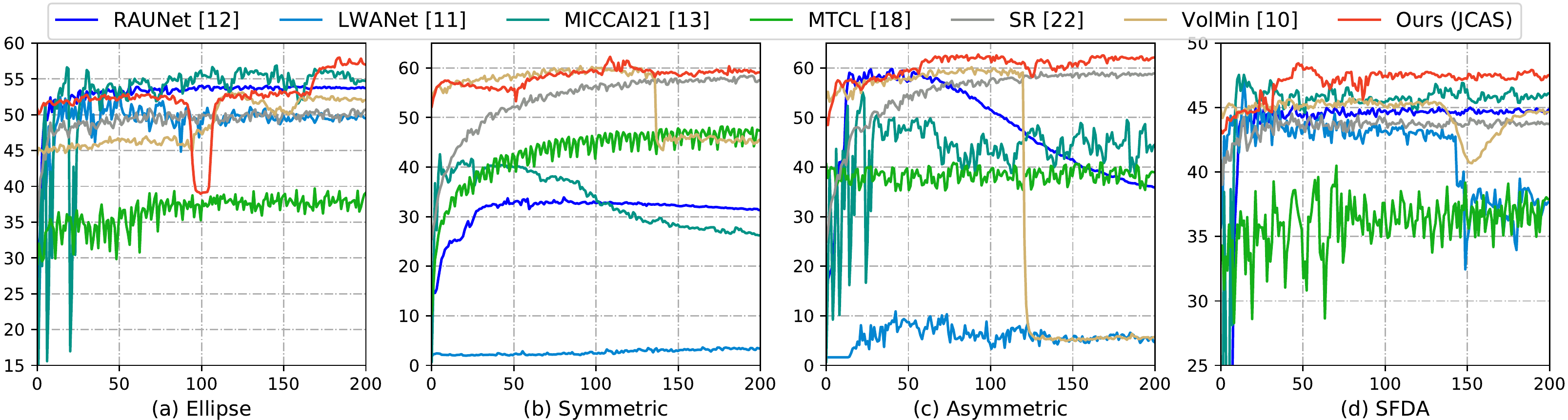}
\caption{Curve of test $Jac$ vs. epoch with four different types of noise labels.}
\label{fig:curve}
\end{figure}

To analyze the influence of JCAS, we then conduct ablation study under ellipse noises. With the pair-wise manner (`w/ Affinity'), the noise rate of supervision signals is greatly reduced, yielding an increment of 4.512\% in $Jac$, while the increased memory overhead is negligible (from 780.71MB to 784.67MB). The devised DAR module (`w/ DAR') is also verified to be effective in differentiating contexture dependencies for the refinement of segmentation predictions, achieving an improvement of 9.207\% $Jac$ score compared to the backbone. Moreover, the proposed CALC strategy further rectifies supervision signals derived from noisy labels and boosts the segmentation performance with 5.965\% $Jac$ gain. 
To verify each component in DAR and CALC, we further ablate intra-class affinity reasoning ($1^{st}$ item in Eq. (\ref{eqDAR})), inter-class affinity reasoning ($2^{nd}$ item in Eq. (\ref{eqDAR})), $\mathcal{L}_{LC}^{C}$, $\mathcal{L}_{LC}^{A}$, and $\mathcal{L}_{CACR}$ under ellipse noises, obtaining 55.795\%, 56.180\%, 55.203\%, 55.179\%, 56.612\% $Jac$. The performance of ablating each component is degraded compared to 58.452\% $Jac$ achieved by our method (Table \ref{table:results}), verifying the effectiveness of individual component in mitigating label noise issue.

Furthermore, we show test $Jac$ curves in Fig. \ref{fig:curve}. While \cite{li2021provably,ni2019raunet} obtain promise results under ellipse and SFDA noises, they reach a high $Jac$ in the early stage and then decrease, overfitting to the other two noises. Notably, our JCAS converges to high performance under four noises and demonstrates more stable training process compared to \cite{ni2020attention,pissas2021effective,xu2021noisy}, verifying its noise-resistant property.

\section{Conclusion}
In this paper, we propose a robust JCAS framework to combat label noise issues in medical image segmentation. Complementing the widely used pixel-wise manner, we introduce the pair-wise manner by capturing affinity relations among pixels to reduce noise rate. Then a DAR module is devised to rectify pixel-wise segmentation predictions by reasoning about intra-class and inter-class affinity relations. We further design a CALC strategy to unify pixel-wise and pair-wise supervisions, and facilitate noise tolerances of both supervisions. Extensive experiments under four noisy labels corroborate the noise immunity of JCAS.

\section{Supplementary}

\subsection{Proof of Theorem \ref{Theorem}}
\begin{theorem}\label{Theorem}
Assume that the class distribution of dataset denoting proportions of pixel number is $\mathcal{N}=[N_{1}, N_{2}, ..., N_{C}]$, and the noise is class-dependent. Given a class-level NTM $\boldsymbol{T}_{C}$, the translated affinity-level NTM $\boldsymbol{T}_{C\to A}$ is calculated by
\begin{equation}
\begin{footnotesize}
\begin{array}{l}
\boldsymbol{T}_{C\to A}(0, 0)=1-\boldsymbol{T}_{C\to A}(0, 1), ~~\boldsymbol{T}_{C\to A}(0, 1)=\frac{\sum_{m} \left[N_{m}\sum_{n}\boldsymbol{T}_{C}(m, n)\right]^{2} - \sum_{m}(N_{m})^{2} \left \| \boldsymbol{T}_{C} \right \|_{2}^{2}}{\sum_{m}\left[ N_{m}(\sum_{m}N_{m}-N_{m})\right]}, \\
\boldsymbol{T}_{C\to A}(1,0)=1-\boldsymbol{T}_{C\to A}(1, 1),  ~~\boldsymbol{T}_{C\to A}(1, 1)=\frac{\sum_{m}(N_{m})^{2} \left \| \boldsymbol{T}_{C} \right \|_{2}^{2}}{\sum_{m}(N_{m})^{2}}.
\end{array}
\end{footnotesize}
\end{equation}
\end{theorem}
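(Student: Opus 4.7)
The plan is to unfold the two nontrivial entries of the affinity-level matrix as conditional probabilities under a simple generative model, and reduce everything to the class-level noise transitions via the class-dependent (hence class-conditionally independent) noise assumption.

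First, I would fix the sampling model: draw two pixels $(k_1,k_2)$ independently with $P(Y_i{=}m)=N_m/S$, where $S=\sum_m N_m$, and then apply the noise channel independently to each pixel, $P(\widetilde Y_i{=}n\mid Y_i{=}m)=\boldsymbol{T}_C(m,n)$. The true and noisy affinities are $A=\mathbb{1}[Y_1{=}Y_2]$ and $\widetilde A=\mathbb{1}[\widetilde Y_1{=}\widetilde Y_2]$, so each entry of $\boldsymbol{T}_{C\to A}$ is a ratio $P(\widetilde A{=}b,A{=}a)/P(A{=}a)$. The two denominators follow immediately from the class distribution: $P(A{=}1)=\sum_m N_m^2/S^2$ and $P(A{=}0)=\sum_m N_m(S-N_m)/S^2$, matching the denominator pattern in the stated formulas.

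Next I would compute the two joint numerators. For $(A,\widetilde A)=(1,1)$, conditioning on $Y_1{=}Y_2{=}m$ and using conditional independence of the two noise draws gives $P(\widetilde Y_1{=}\widetilde Y_2\mid Y_1{=}Y_2{=}m)=\sum_n \boldsymbol{T}_C(m,n)^2$, and weighting by $N_m^2/S^2$ and summing over $m$ produces the numerator $\sum_m N_m^2 \sum_n \boldsymbol{T}_C(m,n)^2$. For $(A,\widetilde A)=(0,1)$, conditioning on $Y_1{=}m_1\neq m_2{=}Y_2$ gives $\sum_n \boldsymbol{T}_C(m_1,n)\boldsymbol{T}_C(m_2,n)$, and the key algebraic step is the identity
\[
\sum_{m_1\ne m_2} N_{m_1}N_{m_2}\sum_n \boldsymbol{T}_C(m_1,n)\boldsymbol{T}_C(m_2,n)=\sum_n\!\Bigl(\sum_m N_m\boldsymbol{T}_C(m,n)\Bigr)^{\!2}-\sum_m N_m^2\sum_n\boldsymbol{T}_C(m,n)^2,
\]
obtained by adding and subtracting the diagonal $m_1{=}m_2$ terms in the unrestricted double sum. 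Dividing by $P(A{=}0)$ yields the expression for $\boldsymbol{T}_{C\to A}(0,1)$. Finally, row-stochasticity of $\boldsymbol{T}_{C\to A}$ supplies $\boldsymbol{T}_{C\to A}(a,0)=1-\boldsymbol{T}_{C\to A}(a,1)$ and closes the $2{\times}2$ table.

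The main obstacle is reconciling the derivation with the notation of the statement. Since rows of $\boldsymbol{T}_C$ sum to one, the expression $\sum_m[N_m\sum_n\boldsymbol{T}_C(m,n)]^2$ literally equals $\sum_m N_m^2$, which cannot be correct; I read it as a typographic slip for $\sum_n[\sum_m N_m\boldsymbol{T}_C(m,n)]^2$, i.e.\ the sum over noisy labels of the squared marginal weight. Likewise I interpret $\|\boldsymbol{T}_C\|_2^2$ appearing inside an $m$-sum as shorthand for the row-wise squared norm $\sum_n\boldsymbol{T}_C(m,n)^2$; under both readings the two numerators match the probabilistic computation above exactly, and the simplifications producing the denominator $\sum_m N_m(S-N_m)$ and the cancellation of the $S^2$ factor are routine. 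Once the notation is pinned down, the proof is a clean application of total probability plus the single polynomial identity noted above.
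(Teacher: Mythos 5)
Your proof follows essentially the same route as the paper's: both count pixel pairs by their clean-class pair $(m,m')$ and noisy-class pair $(n,n')$, form the entries of $\boldsymbol{T}_{C\to A}$ as ratios of the joint counts to the clean-affinity counts, and simplify the $m\ne m'$ sums via the add-and-subtract-the-diagonal identity, with your explicit two-pixel sampling model being just the normalized version of the paper's counting argument. Your diagnosis of the notation is also correct and worth keeping: the paper's own derivation carries the same index transposition (writing $\sum_m[N_m\sum_n\boldsymbol{T}_C(m,n)]^2$ where $\sum_n[\sum_m N_m\boldsymbol{T}_C(m,n)]^2$ is meant) and uses $\left\|\boldsymbol{T}_C\right\|_2^2$ inside an $m$-sum as shorthand for the row-wise quantity $\sum_n\boldsymbol{T}_C(m,n)^2$, so your corrected reading is exactly what the intended argument proves.
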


\begin{proof}
Noise transition matrix (NTM) $\boldsymbol{T}_{C}\in [0,1]^{C\times C}$ specifies the probability of clean label $Y=m$ translating to noisy label $\widetilde{Y}=n$, which can be formulated as $\boldsymbol{T}_{C}(m, n)=p(\widetilde{Y}=n|Y=m)$.  Taking the entry $\boldsymbol{T}_{C\to A}(0, 0)$ of affinity-level NTM for example, we first calculate the number of pixel pairs with clean affinity labels $Y'=0$ through $\sum_{m\ne m'}N_{m}N_{m'}\boldsymbol{T}_{C}(m, n)\boldsymbol{T}_{C}(m', n')$, and compute the number of data pairs with clean affinity labels $Y'=0$ and noisy affinity labels $\widetilde{Y'}=0$ via $\sum_{m\ne m', n\ne n'}N_{m}N_{m'}\boldsymbol{T}_{C}(m, n)\boldsymbol{T}_{C}(m', n')$. Hence, the proportion of these two terms derives the element $\boldsymbol{T}_{C\to A}(0, 0)=p(\widetilde{Y'}=0 | Y'=0)=\frac{\sum_{m\ne m', n\ne n'}N_{m}N_{m'}\boldsymbol{T}_{C}(m, n)\boldsymbol{T}_{C}(m', n')}{\sum_{m\ne m'}N_{m}N_{m'}\boldsymbol{T}_{C}(m, n)\boldsymbol{T}_{C}(m', n')}$. Similar to the derivation of entry $\boldsymbol{T}_{C\to A}(0, 0)$, we can obtain the remaining three entries, and thus we have:
\begin{equation}
\boldsymbol{T}_{C\to A}(0, 0)=\frac{\sum_{m\ne m', n\ne n'}N_{m}N_{m'}\boldsymbol{T}_{C}(m, n)\boldsymbol{T}_{C}(m', n')}{\sum_{m\ne m'}N_{m}N_{m'}\boldsymbol{T}_{C}(m, n)\boldsymbol{T}_{C}(m', n')},
\end{equation}
\begin{equation}
\boldsymbol{T}_{C\to A}(0, 1)=\frac{\sum_{m\ne m', n= n'}N_{m}N_{m'}\boldsymbol{T}_{C}(m, n)\boldsymbol{T}_{C}(m', n')}{\sum_{m\ne m'}N_{m}N_{m'}\boldsymbol{T}_{C}(m, n)\boldsymbol{T}_{C}(m', n')},
\end{equation}
\begin{equation}
\boldsymbol{T}_{C\to A}(1, 0)=\frac{\sum_{m= m', n\ne n'}N_{m}N_{m'}\boldsymbol{T}_{C}(m, n)\boldsymbol{T}_{C}(m', n')}{\sum_{m= m'}N_{m}N_{m'}\boldsymbol{T}_{C}(m, n)\boldsymbol{T}_{C}(m', n')},
\end{equation}
\begin{equation}
\boldsymbol{T}_{C\to A}(1, 1)=\frac{\sum_{m= m', n= n'}N_{m}N_{m'}\boldsymbol{T}_{C}(m, n)\boldsymbol{T}_{C}(m', n')}{\sum_{m= m'}N_{m}N_{m'}\boldsymbol{T}_{C}(m, n)\boldsymbol{T}_{C}(m', n')}.
\end{equation}

Further, note that
\begin{equation}
\begin{aligned}
\begin{split}
\sum_{m\ne m'}&N_{m}N_{m'}\boldsymbol{T}_{C}(m, n)\boldsymbol{T}_{C}(m', n') \\
&=\sum_{m\ne m'}(N_{m}\sum_{n}\boldsymbol{T}_{C}(m, n))(N_{m'}\sum_{n'}\boldsymbol{T}_{C}(m', n'))=\sum_{m\ne m'}N_{m}N_{m'}\\
&=\sum_{m}\left[N_{m}\sum_{m\ne m'}N_{m'}\right]=\sum_{m}\left[ N_{m}(\sum_{m}N_{m}-N_{m})\right] \\
\end{split}
\end{aligned}
\end{equation}
\begin{equation}
\begin{aligned}
\begin{split}
\sum_{m= m'}&N_{m}N_{m'}\boldsymbol{T}_{C}(m, n)\boldsymbol{T}_{C}(m', n') \\
&=\sum_{m= m'}(N_{m}\sum_{n}\boldsymbol{T}_{C}(m, n))(N_{m'}\sum_{n'}\boldsymbol{T}_{C}(m', n'))=\sum_{m= m'}N_{m}N_{m'}\\
&=\sum_{m}(N_{m})^{2} \\
\end{split}
\end{aligned}
\end{equation}
\begin{equation}
\begin{aligned}
\begin{split}
\sum_{m= m', n=n'}&N_{m}N_{m'}\boldsymbol{T}_{C}(m, n)\boldsymbol{T}_{C}(m', n')\\
=&\sum_{m, n}\left[N_{m} \boldsymbol{T}_{C}(m, n)\right]^{2}=\sum_{m}(N_{m})^{2} \sum_{n} \left[\boldsymbol{T}_{C}(m, n)\right]^{2}=\sum_{m}(N_{m})^{2} \left \| \boldsymbol{T}_{C} \right \|_{2}^{2} \\
\end{split}
\end{aligned}
\end{equation}
\begin{equation}
\begin{aligned}
\begin{split}
\sum_{m\ne m', n=n'}&N_{m}N_{m'}\boldsymbol{T}_{C}(m, n)\boldsymbol{T}_{C}(m', n) ~~~~~~~~~~~~~~~~~~~~~~~~~~~~~~~~~~~~~~~~~~~~~\\
=&\sum_{m\ne m'}(N_{m}\sum_{n}\boldsymbol{T}_{C}(m, n))(N_{m'}\sum_{n}\boldsymbol{T}_{C}(m', n))\\
=&\sum_{m} \left[N_{m}\sum_{n}\boldsymbol{T}_{C}(m, n)\right]^{2}- \sum_{m}(N_{m})^{2} \left \| \boldsymbol{T}_{C} \right \|_{2}^{2}\\
\end{split}
\end{aligned}
\end{equation}
Substituting the derived equations above to Eq. (4-5), we have proved the Theorem \label{Theorem}. 
\end{proof}

\subsection{Implementation of Class-Affinity Consistency Regularization}

Given class-level noise transition matrix ($Tc$), affinity-level noise transition matrix ($Ta$), and class distribution ($N$), the proposed class-affinity consistency regularization can be derived through the code shown in Listing \ref{lstlisting:CACR_loss}, and the completed code will be published online. Since the true class distribution is not available due to the noisy labels, we leverage the class distribution of pseudo labels generated from the warm-up model as an approximation. 

\begin{lstlisting}[language=Python, caption=Implementation of class-affinity consistency regularization., label={lstlisting:CACR_loss}]
import torch

def CACR_loss(Tc, Ta, N):
    v00 = v01 = v10 = v11 = 0
    num_classes = Tc.shape[0]
    for m1 in range(num_classes):
        for n1 in range(num_classes):
            a = t[m1][n1]
            
            for m2 in range(num_classes):
                for n2 in range(num_classes):
                    b = t[m2][n2]
                    
                    if m1 == m2 and n1 == n2:
                        v11 += a * b * N[m1] * N[m2]
                    if m1 == m2 and n1 != n2:
                        v10 += a * b * N[m1] * N[m2]
                    if m1 != m2 and n1 == n2:
                        v01 += a * b * N[m1] * N[m2]
                    if m1 != m2 and n1 != n2:
                        v00 += a * b  * N[m1] * N[m2]
                        
    Tc_a = torch.zeros([2, 2]).cuda()
    Tc_a[0][0] = v11 / (v11 + v10)
    Tc_a[0][1] = v10 / (v11 + v10)
    Tc_a[1][0] = v01 / (v01 + v00)
    Tc_a[1][1] = v00 / (v01 + v00)
    loss = torch.nn.MSELoss(reduction='mean')(Tc_a, Ta)
    return loss
    
\end{lstlisting}

\subsection{Visualization Results}
We provide more surgical instrument segmentation results under four label noises, including ellipse (Fig. \ref{fig:results_ellipse}), symmetric (Fig. \ref{fig:results_sym}), asymmetric (Fig. \ref{fig:results_asym}) and SFDA (Fig. \ref{fig:results_sfda}) noises. The qualitative comparison results demonstrate the superiority of the proposed JCAS framework in learning precise semantic correlations.

\begin{figure}[h!]
\vspace{-0.2cm}
\centering
\includegraphics[width=122mm]{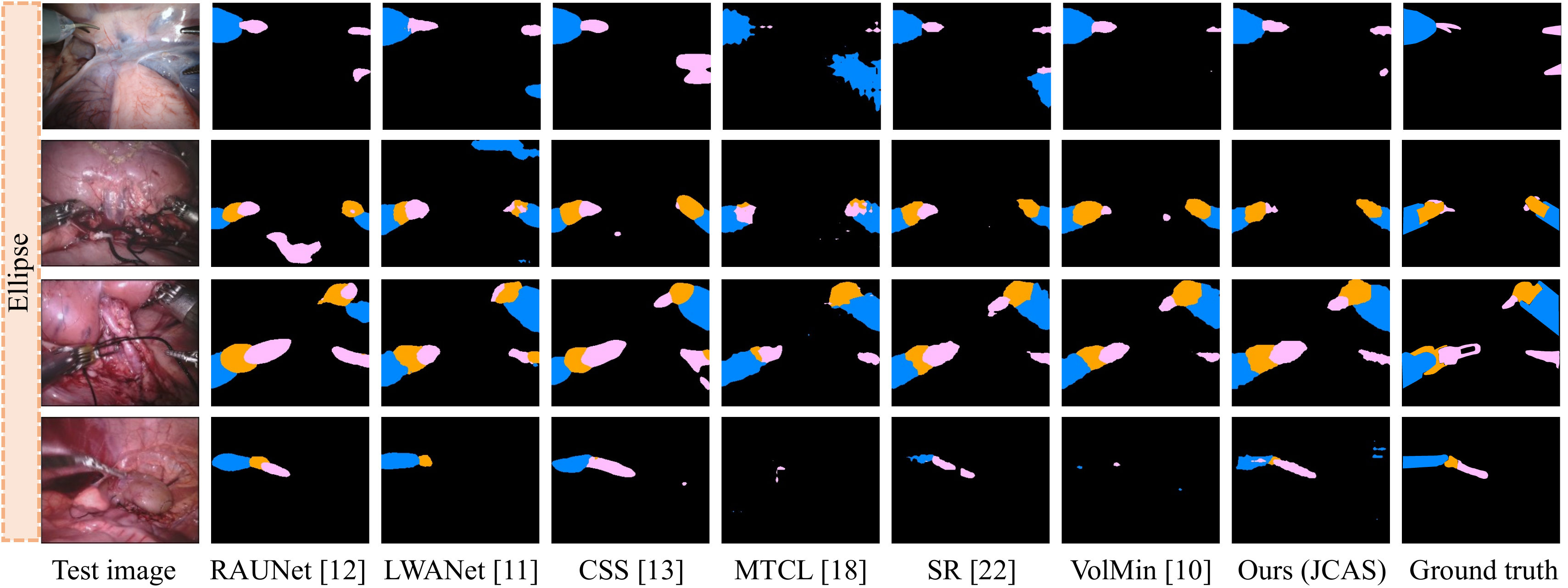}
\caption{Illustration of dataset with ellipse label noises.}
\label{fig:results_ellipse}
\vspace{-0.2cm}
\end{figure}

\begin{figure}[h!]
\vspace{-0.4cm}
\centering
\includegraphics[width=122mm]{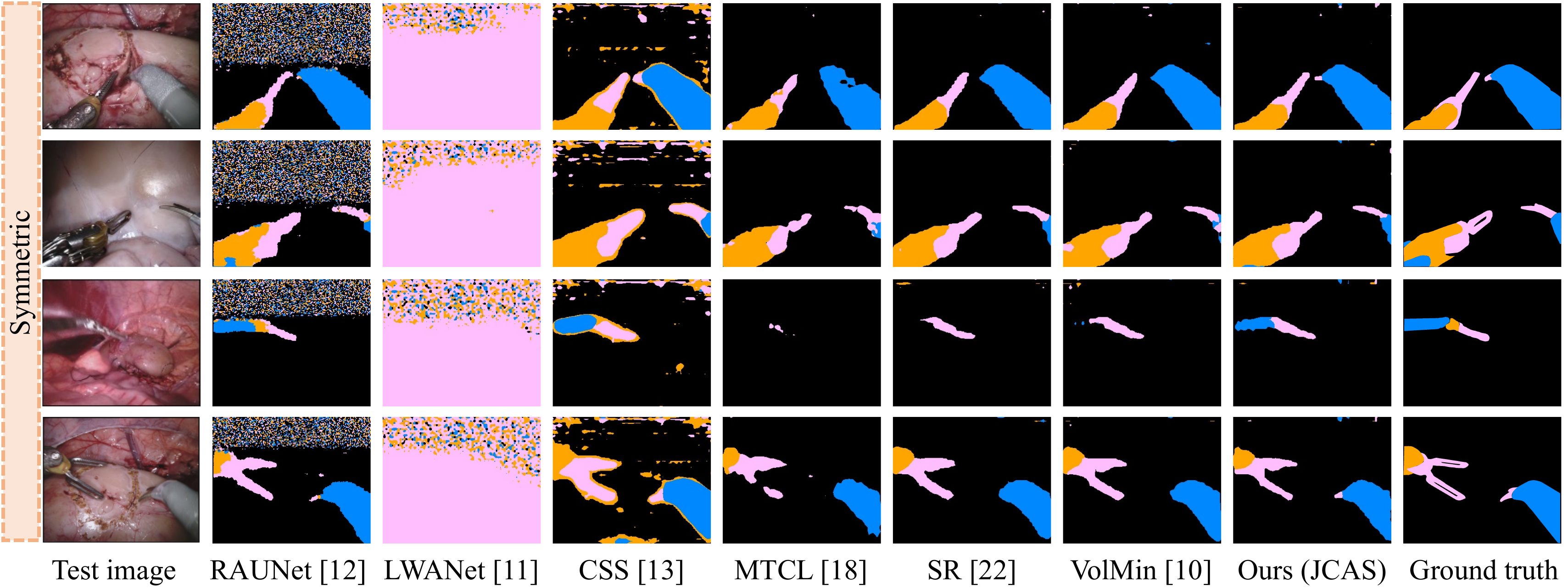}
\caption{Illustration of dataset with symmetric label noises.}
\label{fig:results_sym}
\end{figure}

\begin{figure}[h!]
\centering
\includegraphics[width=122mm]{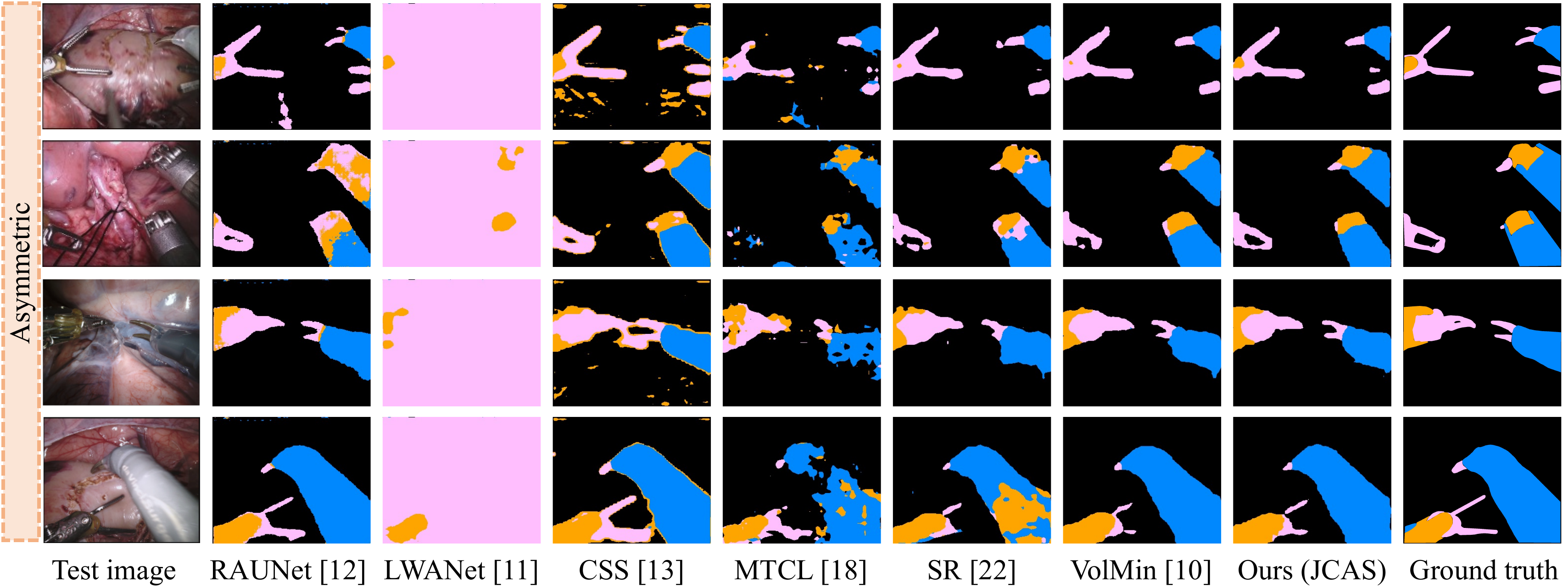}
\caption{Illustration of dataset with asymmetric label noises.}
\label{fig:results_asym}
\end{figure}

\begin{figure}[h!]
\centering
\includegraphics[width=122mm]{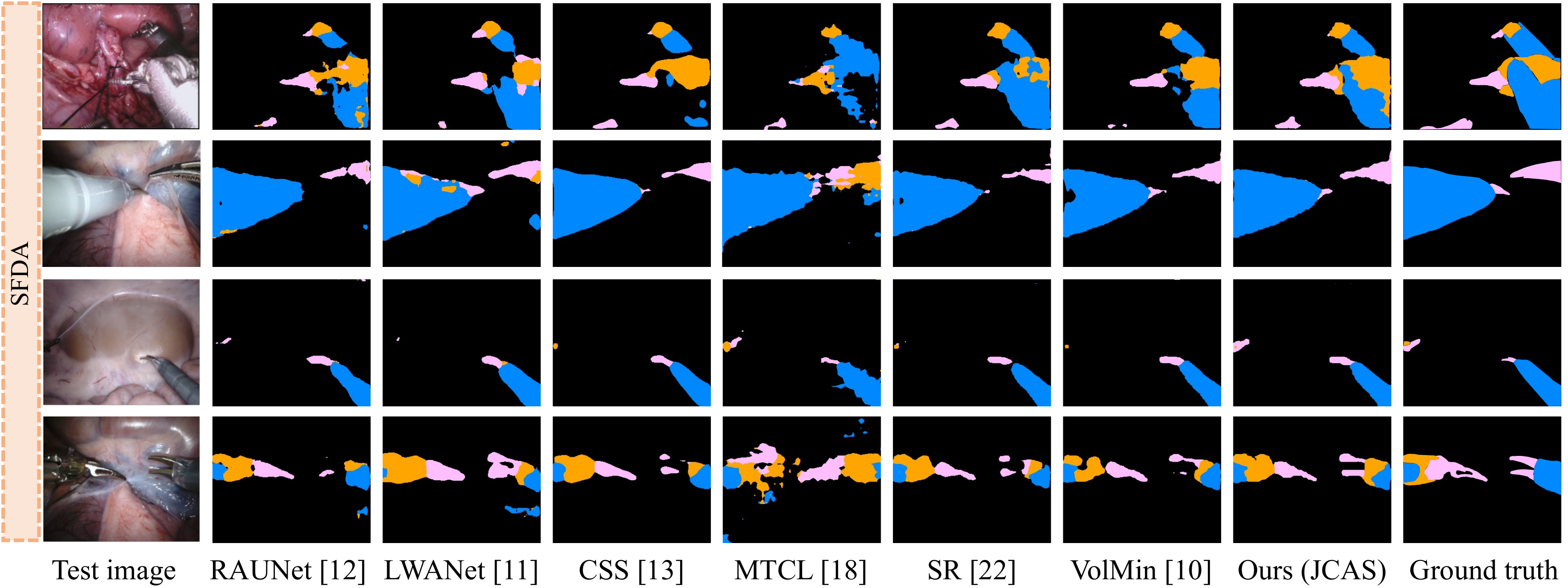}
\caption{Illustration of dataset with SFDA label noises.}
\label{fig:results_sfda}
\end{figure}

\subsubsection{Acknowledgments.} This work was supported by Hong Kong Research Grants Council (RGC) Early Career Scheme grant 21207420 (CityU 9048179) and Hong Kong Research Grants Council (RGC) General Research Fund 11211221(CityU 9043152).

%
%
%

\begin{thebibliography}{10}
\providecommand{\url}[1]{\texttt{#1}}
\providecommand{\urlprefix}{URL }
\providecommand{\doi}[1]{https://doi.org/#1}

\bibitem{allan20202018}
Allan, M., Kondo, S., Bodenstedt, S., Leger, S., Kadkhodamohammadi, R., Luengo,
  I., Fuentes, F., Flouty, E., Mohammed, A., Pedersen, M., et~al.: 2018 robotic
  scene segmentation challenge. arXiv preprint arXiv:2001.11190  (2020)

\bibitem{allan20192017}
Allan, M., Shvets, A., Kurmann, T., Zhang, Z., Duggal, R., Su, Y.H., Rieke, N.,
  Laina, I., Kalavakonda, N., Bodenstedt, S., et~al.: 2017 robotic instrument
  segmentation challenge. arXiv preprint arXiv:1902.06426  (2019)

\bibitem{chen2017deeplab}
Chen, L.C., Papandreou, G., Kokkinos, I., Murphy, K., Yuille, A.L.: Deeplab:
  Semantic image segmentation with deep convolutional nets, atrous convolution,
  and fully connected crfs. IEEE Trans. Pattern Anal. Mach. Intell.
  \textbf{40}(4),  834--848 (2017)

\bibitem{cheng2020learning}
Cheng, J., Liu, T., Ramamohanarao, K., Tao, D.: Learning with bounded instance
  and label-dependent label noise. In: ICML. pp. 1789--1799. PMLR (2020)

\bibitem{gonzalez2020isinet}
Gonz{\'a}lez, C., Bravo-S{\'a}nchez, L., Arbelaez, P.: Isinet: an
  instance-based approach for surgical instrument segmentation. In: MICCAI. pp.
  595--605. Springer (2020)

\bibitem{guo2022simt}
Guo, X., Liu, J., Liu, T., Yuan, Y.: Simt: Handling open-set noise for domain
  adaptive semantic segmentation. In: CVPR (2022)

\bibitem{guo2021metacorrection}
Guo, X., Yang, C., Li, B., Yuan, Y.: Metacorrection: Domain-aware meta loss
  correction for unsupervised domain adaptation in semantic segmentation. In:
  CVPR. pp. 3927--3936 (2021)

\bibitem{karimi2020deep}
Karimi, D., Dou, H., Warfield, S.K., Gholipour, A.: Deep learning with noisy
  labels: Exploring techniques and remedies in medical image analysis. Med.
  Image Anal.  \textbf{65},  101759 (2020)

\bibitem{karimi2021convolution}
Karimi, D., Vasylechko, S.D., Gholipour, A.: Convolution-free medical image
  segmentation using transformers. In: MICCAI. pp. 78--88. Springer (2021)

\bibitem{li2021superpixel}
Li, S., Gao, Z., He, X.: Superpixel-guided iterative learning from noisy labels
  for medical image segmentation. In: MICCAI. pp. 525--535. Springer (2021)

\bibitem{li2021provably}
Li, X., Liu, T., Han, B., Niu, G., Sugiyama, M.: Provably end-to-end
  label-noise learning without anchor points. In: ICML. pp. 6403--6413 (2021)

\bibitem{ni2020attention}
Ni, Z.L., Bian, G.B., Hou, Z.G., Zhou, X.H., Xie, X.L., Li, Z.:
  Attention-guided lightweight network for real-time segmentation of robotic
  surgical instruments. In: ICRA. pp. 9939--9945. IEEE (2020)

\bibitem{ni2019raunet}
Ni, Z.L., Bian, G.B., Zhou, X.H., Hou, Z.G., Xie, X.L., Wang, C., Zhou, Y.J.,
  Li, R.Q., Li, Z.: Raunet: Residual attention u-net for semantic segmentation
  of cataract surgical instruments. In: ICONIP. pp. 139--149. Springer (2019)

\bibitem{pissas2021effective}
Pissas, T., Ravasio, C.S., Cruz, L.D., Bergeles, C.: Effective semantic
  segmentation in cataract surgery: What matters most? In: MICCAI. pp.
  509--518. Springer (2021)

\bibitem{shu2019meta}
Shu, J., Xie, Q., Yi, L., Zhao, Q., Zhou, S., Xu, Z., Meng, D.:
  Meta-weight-net: Learning an explicit mapping for sample weighting. In:
  NeurIPS. pp. 1919--1930 (2019)

\bibitem{wang2020meta}
Wang, J., Zhou, S., Fang, C., Wang, L., Wang, J.: Meta corrupted pixels mining
  for medical image segmentation. In: MICCAI. pp. 335--345. Springer (2020)

\bibitem{wu2021class2simi}
Wu, S., Xia, X., Liu, T., Han, B., Gong, M., Wang, N., Liu, H., Niu, G.:
  Class2simi: A noise reduction perspective on learning with noisy labels. In:
  ICML. pp. 11285--11295 (2021)

\bibitem{xu2021leveraging}
Xu, L., Ouyang, W., Bennamoun, M., Boussaid, F., Sohel, F., Xu, D.: Leveraging
  auxiliary tasks with affinity learning for weakly supervised semantic
  segmentation. In: ICCV. pp. 6984--6993 (2021)

\bibitem{xu2021noisy}
Xu, Z., Lu, D., Wang, Y., Luo, J., Jayender, J., Ma, K., Zheng, Y., Li, X.:
  Noisy labels are treasure: mean-teacher-assisted confident learning for
  hepatic vessel segmentation. In: MICCAI. pp. 3--13. Springer (2021)

\bibitem{zhang2020disentangling}
Zhang, L., Tanno, R., Xu, M.C., Jin, C., Jacob, J., Cicarrelli, O., Barkhof,
  F., Alexander, D.: Disentangling human error from ground truth in
  segmentation of medical images. NeurIPS  \textbf{33},  15750--15762 (2020)

\bibitem{zhang2020robust}
Zhang, T., Yu, L., Hu, N., Lv, S., Gu, S.: Robust medical image segmentation
  from non-expert annotations with tri-network. In: MICCAI. pp. 249--258.
  Springer (2020)

\bibitem{zhang2020distilling}
Zhang, Z., Zhang, H., Arik, S.O., Lee, H., Pfister, T.: Distilling effective
  supervision from severe label noise. In: CVPR. pp. 9294--9303 (2020)

\bibitem{zhou2021learning}
Zhou, X., Liu, X., Wang, C., Zhai, D., Jiang, J., Ji, X.: Learning with noisy
  labels via sparse regularization. In: ICCV. pp. 72--81 (2021)

\bibitem{zhu2021weakly}
Zhu, X., Chen, J., Zeng, X., Liang, J., Li, C., Liu, S., Behpour, S., Xu, M.:
  Weakly supervised 3d semantic segmentation using cross-image consensus and
  inter-voxel affinity relations. In: ICCV. pp. 2834--2844 (2021)

\end{thebibliography}
%


\end{document}